\newtheorem{theorem}{Theorem}[section]
\newtheorem{proposition}[theorem]{Proposition}
\newenvironment{proof}{\noindent{\bf Proof:}}{\hfill$\square$}
\DeclarePairedDelimiterX\MeijerM[3]{\lparen}{\rparen}%
{\begin{smallmatrix}#1 \\ #2\end{smallmatrix}\delimsize\vert\,#3}
\newcommand\MeijerG[8][]{%
  G^{\,#2,#3}_{#4,#5}\MeijerM[#1]{#6}{#7}{#8}}
\newcommand\MeijerG*[7]{%
  G^{\,#1,#2}_{#3,#4}\MeijerM*{#5}{#6}{#7}}
\title{Optimal binning: mathematical programming formulation}
\author{Guillermo Navas-Palencia\\ \texttt{g.navas.palencia@gmail.com}}
\date{\today\footnote{This version contains a new objective function and the maximum p-value constraint for the continuous target. The initial version date is January 22, 2020.}}
\begin{document}
\maketitle

\begin{abstract}
The optimal binning is the optimal discretization of a variable into bins given a discrete or continuous numeric target. We present a rigorous and extensible mathematical programming formulation to solve the optimal binning problem for a binary, continuous and multi-class target type, incorporating constraints not previously addressed. For all three target types, we introduce a convex mixed-integer programming formulation. Several algorithmic enhancements, such as automatic determination of the most suitable monotonic trend via a Machine-Learning-based classifier, and implementation aspects are thoughtfully discussed. The new mathematical programming formulations are carefully implemented in the open-source python library OptBinning.
\end{abstract}

\section{Introduction}
Binning (grouping or bucketing) is a technique to discretize the values of a continuous variable into bins (groups or buckets). From a modeling perspective, the binning technique may address prevalent data issues such as the handling of missing values, the presence of outliers and statistical noise, and data scaling. Furthermore, the binning process is a valuable interpretable tool to enhance the understanding of the nonlinear dependence between a variable and a given target while reducing the model complexity. Ultimately, resulting bins can be used to perform data transformations.

Binning techniques are extensively used in machine learning applications, exploratory data analysis and as an algorithm to speed up learning tasks; recently, binning has been applied to accelerate learning in gradient boosting decision tree \cite{Ke2017}. In particular, binning is widely used in credit risk modeling, being an essential tool for credit scorecard modeling to maximize differentiation between high-risk and low-risk observations.

There are several unsupervised and supervised binning techniques. Common unsupervised techniques are equal-width and equal-size or equal-frequency interval binning. On the other hand, well-known supervised techniques based on merging are Monotone Adjacent Pooling Algorithm (MAPA), also known as Maximum Likelihood Monotone Coarse Classifier (MLMCC) \cite{Thomas2002} and ChiMerge \cite{Kerber1992}, whereas other techniques based on decision trees are CART \cite{Breiman1984}, Minimum Description Length Principle (MDLP) \cite{Fayyad1993} and, more recently, condition inference trees (CTREE) \cite{Hothorn2006}.

The binning process might require to satisfy certain constraints. These constraints might range from requiring a minimum number of records per bin to monotonicity constraints. This variant of the binning process is known as the optimal binning process. The optimal binning is generally solved by iteratively merging an initial granular discretization until imposed constraints are satisfied. Performing this fine-tuning manually is likely to be unsatisfactory as the number of constraints increases, leading to suboptimal or even infeasible solutions. However, we note that this manual adjustment has been encouraged by some authors \cite{Siddiqi2005}, legitimating the existing interplay of ``art and science'' in the binning process.

There are various commercial software tools for solving the optimal binning problem\footnote{To the author's knowledge, at the time of writing, these tools are restricted to the problem of discretizing a variable with respect to a binary target.}. Software IBM SPSS and the MATLAB Financial Toolbox, use MDLP and MAPA as default algorithm, respectively. The most advanced tool to solve the optimal binning problem is available in the SAS Enterprise Miner software. A limited description of the proprietary algorithm can be found in \cite{Oliveira2008}, where two mixed-integer programming (MIP) formulations are sketched: a mixed-integer linear programming (MILP) formulation to obtain a fast probably suboptimal solution, and a mixed-integer nonlinear programming (MINLP) formulation to obtain an optimal solution. The suboptimal formulation is the default method due to computational time limitations (MILP techniques are considerably more mature). We note that the SAS implementation allows most of the constraints required in credit risk modeling, becoming an industry standard. Besides, there exist a few open-source solutions, but the existing gap comparing to the commercial options in terms of capabilities is still significant. Among the available alternatives, we mention the MATLAB implementation of the monotone optimal binning in \cite{Mironchyk2017}, and the R specialized packages \texttt{smbinning} \cite{smbinning}, relying on CTREE, and \texttt{MOB} \cite{mob}, which merely include basic functionalities.

In this paper, we develop a rigorous and extensible mathematical programming formulation for solving the optimal binning problem. This general formulation can efficiently handle binary, continuous, and multi-class target type. The presented formulations incorporate the constraints generally required to produce a good binning \cite{Siddiqi2005}, and new constraints not previously addressed. For all three target types, we introduce a convex mixed-integer programming formulation, ranging from a integer linear programming (ILP) formulation for the simplest cases to a mixed-integer quadratic programming (MIQP) formulation for those cases adding more involved constraints.

The remainder of the paper is organized as follows. Section \ref{section_mip_formulation} introduces our general problem formulation and the corresponding mixed-integer programming formulation for each supported target. We focus on the formulation for binary target, investigating various formulation variants.  Then, in Section \ref{section_algorithms} we discuss in detail several algorithmic aspects such as the automatic determination of the optimal monotonic trend and the development of presolving algorithms to efficiently solve large size instances. Section \ref{section_experiments} includes experiments with real-world datasets and compares the performance of supported solvers for large size instances. Finally, in Section \ref{section_conclusions}, we present our conclusions and discuss possible research directions.

\section{Mathematical programming formulation}\label{section_mip_formulation}

The optimal binning process comprises two steps: A pre-binning process that generates an initial granular discretization, and a subsequent refinement or optimization to satisfy imposed constraints. The pre-binning process uses, for example, a decision tree algorithm to calculate the initial split points. The resulting $m$ split points are sorted in strictly ascending order, $s_1 < s_2 < \ldots < s_m$ to create $n = m + 1$ pre-bins. These pre-bins are defined by the intervals $(-\infty, s_1), [s_1, s_2), \ldots, [s_m, \infty)$.

Given $n$ pre-bins, the decision variables consist of a binary lower triangular matrix (indicator variables) of size $n$, $X_{ij} \in \{0, 1\}, \forall (i, j) \in \{1, \ldots, n: i \ge j\}$. The starting point is a diagonal matrix, meaning that initially all pre-bins are selected. A basic feasible solution must satisfy the following constraints:
\begin{itemize}
\item All pre-bins are either isolated or merged to create a larger bin interval, but cannot be erased. Each column must contain exactly one $1$.
\begin{equation}\label{def_ct_sum_columns_one}
\sum_{i=1}^n X_{ij} = 1, \quad j = 1, \ldots, n.
\end{equation}
\item Only consecutive pre-bins can be merged. Continuity by rows, no $0-1$ gaps are allowed.
\begin{equation}\label{def_ct_flow_continuity}
X_{ij} \le X_{ij+1}, \quad i = 1, \ldots, n; \; j = 1, \ldots, i-1.
\end{equation}
\item A solution has a last bin interval of the form $[s_k, \infty)$, for $k \le n$. The binary decision variable $X_{nn} = 1$.
\end{itemize}

To clarify, Figure \ref{basic_diagram} shows an example of a feasible solution. In this example, pre-bins corresponding to split points $(s_2, s_3, s_4)$ and $(s_5, s_6)$ are merged, thus having an optimal binning with bin intervals $(-\infty, s_1), [s_1, s_4), [s_4, s_6), [s_6, \infty)$.

\begin{figure}[ht]
	\centering
	\includegraphics[scale=0.33]{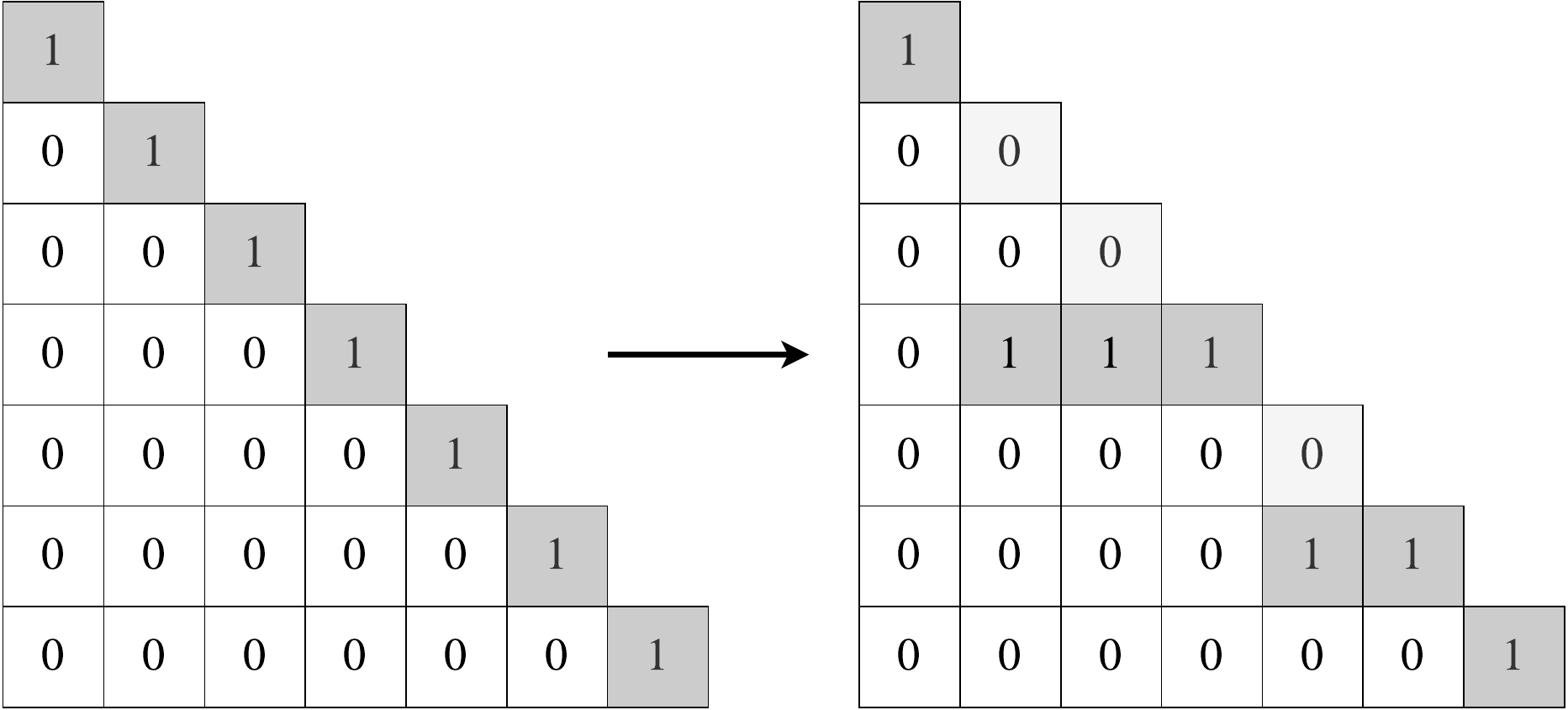}
	\caption{Lower triangular matrix $X$. Initial solution after pre-binning (left). Optimal solution with 4 bins after merging pre-bins (right).}
	\label{basic_diagram}
\end{figure}

The described problem can be seen as a generalized assignment problem. A direct formulation of metrics involving ratios such as the mean or most of the divergence measures on merged bins leads to a non-convex MINLP formulation, due to the ratio of sums of binary variables. Solving non-convex MINLP problems to optimality is a challenging task requiring the use of global MINLP solvers, especially for large size instances.

Investigating the binary lower triangular matrix in Figure \ref{basic_diagram}, it can be observed, by analyzing the constraints in Equations (\ref{def_ct_sum_columns_one}) and (\ref{def_ct_flow_continuity}) imposing continuity by rows, that a feasible solution is entirely characterized by the position of the first $1$ for each row. This observation permits the pre-computation of the set of possible solutions by rows, obtaining an aggregated matrix with the shape of $X$ for each involved metric. Consequently, the non-convex objective function and constraints are linearized, resulting in a convex formulation by exploiting problem information. Using this reformulation, we shall see that the definition of constraints for binary, continuous and multi-class target are almost analogous.

\subsection{Mixed-integer programming formulation for binary target}
Given a binary target $y$ used to discretize a variable $x$ into $n$ bins, we define the normalized count of non-events (NE) $p_i$, case $y=0$, and events (E) $q_i$, case $y=1$, for each bin $i$ as
\begin{equation*}
p_i = \frac{r_i^{NE}}{r_T^{NE}}, \quad q_i = \frac{r_i^E}{r_T^E},
\end{equation*}
where $r_i^{NE}$, $r_i^{E}$, $r_T^{NE}$ and $r_T^E$ are the number of non-event and event records per bin, and the total number of non-event records and event records, respectively. Next, we define the Weight of Evidence (WoE) and event rate ($D$) for each bin,
\begin{equation*}
\text{WoE}_i = \log\left(\frac{r_i^{NE}/ r_T^{NE}}{r_i^E / r_T^E}\right), \quad D_i = \frac{r_i^E}{r_i^E + r_i^{NE}}.
\end{equation*}
The Weight of Evidence $\text{WoE}_i$ and event rate $D_i$ for each bin are related by means of the functional equations
\begin{align*}
\text{WoE}_i &= \log\left(\frac{1 - D_i}{D_i}\right) + \log\left(\frac{r_T^E}{ r_T^{NE}}\right) = \log\left(\frac{r_T^E}{ r_T^{NE}}\right) - \text{logit}(D_i)\\
D_i &= \left(1 + \frac{r_T^{NE}}{r_T^{E}} e^{\text{WoE}_i}\right)^{-1} = \left(1 + e^{\text{WoE}_i - \log\left(\frac{r_T^{E}}{r_T^{NE}}\right)}\right)^{-1},
\end{align*}
where $D_i$ can be characterized as a logistic function of $\text{WoE}_i$, and  $\text{WoE}_i$ can be expressed in terms of the logit function of $D_i$. This shows that WoE is inversely related to the event rate. The constant term $\log(r_T^{E} / r_T^{NE})$ is the log ratio of the total number of event and the total number of non-events. 

Divergence measures serve to assess the discriminant power of a binning solution. The Jeffreys' divergence \cite{Jeffreys1946}, also known as Information Value (IV) within the credit risk industry, is a symmetric measure expressible in terms of the Kullback-Leibler divergence $D_{KL}(P || Q)$ \cite{kullback1951} defined by
\begin{equation*}
J(P|| Q) = IV = D_{KL}(P || Q) + D_{KL}(Q || P) = \sum_{i=1}^n (p_i - q_i) \log \left(\frac{p_i}{q_i}\right).
\end{equation*}
The IV statistic is unbounded, but some authors have proposed rules of thumb to settings quality thresholds \cite{Siddiqi2005}. Alternatively, the Jensen-Shannon divergence is a bounded symmetric measure also expressible in terms of the Kullback-Leibler divergence
\begin{equation*}
JSD(P || Q) = \frac{1}{2}\left(D(P || M) + D(Q || M)\right), \quad M = \frac{1}{2}(P + Q),
\end{equation*}
and bounded by $JSD(P||Q) \in [0, \log(2)]$. Note that these divergence measures cannot be computed when $r_i^{NE} = 0$ and/or $r_i^E = 0$. Other divergences measures without this limitation are described in \cite{Zeng2013}.

A good binning algorithm for binary target should be characterized by the following properties \cite{Siddiqi2005}:
\begin{enumerate}
\item Missing values are binned separately.
\item Each bin should contain at least $5\%$ observations.
\item No bins should have $0$ non-events or events records.
\end{enumerate}

Property 1 is adequately addressed in many implementations, where missing and special values are incorporated as additional bins after the optimal binning terminates. Property 2 is a usual constraint to enforce representativeness. Property 3 is required to compute the above divergence measures.

Let us define the parameters of the mathematical programming formulation:
\begin{align*}
& n \in \mathbb{N} && \textrm{number of pre-bins.}\\ 
& r^{NE}_T \in \mathbb{N} && \textrm{total number of non-event records.}\\
& r^E_T \in \mathbb{N} && \textrm{total number of event records.}\\
& r^{NE}_i \in \mathbb{N}      && \textrm{number of non-event records per pre-bin.}\\
& r^E_i \in \mathbb{N}      && \textrm{number of event records per pre-bin.}\\
& r_i = r^{NE}_i + r^E_i        && \textrm{number of records per pre-bin.}\\
& r^{NE}_{\min} \in \mathbb{N}  && \textrm{minimum number of non-event records per bins.}\\
& r^{NE}_{\max} \in \mathbb{N}  && \textrm{maximum number of non-event records per bins.}\\
& r^E_{\min} \in \mathbb{N}  && \textrm{minimum number of event records per bins.}\\
& r^E_{\max} \in \mathbb{N}  && \textrm{maximum number of event records per bins.}\\
& b_{\min} \in \mathbb{N}  && \textrm{minimum number of bins.}\\
& b_{\max} \in \mathbb{N}  && \textrm{maximum number of bins.}
\end{align*}

The objective function is to maximize the discriminant power among bins, therefore, maximize a divergence measure. The IV can be computed using the described parameters and the decision variables $X_{ij}$, yielding
\begin{equation*}
IV = \sum_{i=1}^n \left(\sum_{j=1}^i \left(\frac{r_z^{NE}}{r^{NE}_T} - \frac{r_z^E}{r^E_T}\right)X_{ij}\right) \log\left(\frac{\sum_{j=1}^i r_j^{NE} / r_T^{NE} X_{ij}}{\sum_{j=1}^i r_j^E / r_T^E X_{ij}}\right),
\end{equation*}
The IV is the sum of the IV contributions per bin, i.e., the sum by rows. As previously stated, given the constraints in Equations (\ref{def_ct_sum_columns_one}) and (\ref{def_ct_flow_continuity}), an aggregated low triangular matrix $V_{ij} \in \mathbb{R}^+_0, \forall (i, j) \in \{1, \ldots, n: i \ge j\}$ with all possible IV values from bin merges can be pre-computed as follows
\begin{equation}\label{iv_obj_params}
V_{ij} = \left(\sum_{z=j}^i \frac{r_z^{NE}}{r^{NE}_T} - \frac{r_z^E}{r^E_T}\right) \log \left(\frac{\sum_{z=j}^i r_z^{NE} / r_T^{NE}}{\sum_{z=j}^i r_z^E / r_T^E}\right), \quad i=1, \ldots, n;\; j=1, \ldots, i.
\end{equation}
The optimal IV for each bin is determined by using the remarked observation that a solution is characterized by the position of the first $1$ for each row, thus, using the continuity constraint in (\ref{def_ct_flow_continuity}), we obtain
\begin{equation}
V_{i\cdot} = V_{i1} X_{i1} + \sum_{j=2}^i V_{ij} (X_{ij} - X_{ij-1}) \Longleftrightarrow V_{ii} X_{ii} + \sum_{j=1}^{i-1} (V_{ij} - V_{ij+1})X_{ij}.
\label{iv_obj_definition}
\end{equation}
for $i=1, \ldots, n$. The latter formulation is preferred to reduce the fill-in of the matrix of constraints. Similarly, a lower triangular matrix of event rates $D_{ij} \in [0, 1], \forall (i, j) \in \{1, \ldots, n: i \ge j\}$ can be pre-computed as follows
\begin{equation*}
D_{ij} = \frac{\sum_{z=j}^i r_z^E}{\sum_{z=j}^i r_z}, \quad i=1, \ldots, n;\; j=1, \ldots, i.
\end{equation*}

The ILP formulation, with no additional constraints such as monotonicity constraints, can be stated as follows
\begin{subequations}
\begin{align}
\underset{X}{\text{max}} \quad & \sum_{i=1}^n V_{ii} X_{ii} + \sum_{j=1}^{i-1} (V_{ij} - V_{ij+1}) X_{ij} \label{objective}\\
\text{s.t.} \quad &  \sum_{i=j}^{n} X_{ij} = 1, & j = 1,\ldots, n\label{sum_columns_one}\\
\quad & X_{ij} - X_{ij+1} \le 0 , & i = 1, \ldots, n; \; j = 1, \ldots, i-1\label{flow_continuity}\\
\quad & b_{\min} \le \sum_{i=1}^n X_{ii} \le b_{\max} \label{min_max_bins}\\
\quad & r_{\min} X_{ii} \le \sum_{j=1}^i r_j X_{ij} \le r_{\max} X_{ii}, & i = 1, \ldots, n \label{min_max_bin_size}\\
\quad & r^{NE}_{\min} X_{ii} \le \sum_{j=1}^i r^{NE}_j X_{ij} \le r^{NE}_{\max} X_{ii}, & i = 1, \ldots, n \label{min_max_bin_n_nonevent}\\
\quad & r^E_{\min} X_{ii} \le \sum_{j=1}^i r^E_j X_{ij} \le r^E_{\max} X_{ii}, & i = 1, \ldots, n \label{min_max_bin_n_event}\\
\quad & X_{ij} \in \{0, 1\}, & \forall (i, j) \in \{1, \ldots, n: i \ge j\}\label{binary_indicator}
\end{align}
\end{subequations}

Apart from the constraints (\ref{sum_columns_one}) and (\ref{flow_continuity}) already described, constraint (\ref{min_max_bins}) imposes a lower and upper bound on the number of bins. Other range constraints (\ref{min_max_bin_size}-\ref{min_max_bin_n_event}) limit the number of total, non-event and event records per bin. Note that to increase sparsity, the range constraints are not implemented following the standard formulation to avoid having the data twice in the model. For example, constraint (\ref{min_max_bins}) is replaced by
\begin{equation*}
d + \sum_{i=1}^n X_{ii} - b_{\max} =0, \quad 0 \le d \le b_{\max} - b_{\min}.
\end{equation*}

\subsubsection{Monotonicity constraints}\label{section_monotonicity}
Monotonicity constraints between the event rates of consecutive bins can be imposed to ensure legal compliance and business constraints. Three types of monotonic trends are considered: the usual ascending/descending, and two types of unimodal forms, concave/convex and peak/valley. This modeling flexibility can help to capture overlooked or unexpected patterns, providing new insights to enrich models. Note that work in \cite{Oliveira2008} uses the WoE approach instead.

Applying Equation (\ref{iv_obj_definition}), the optimal event rate for each bin is given by
\begin{equation}
D_{i\cdot} = D_{ii} X_{ii} + \sum_{j=1}^{i-1} (D_{ij} - D_{ij+1})X_{ij}, \quad i=1, \ldots, n.\label{event_rate_definition}
\end{equation}

\paragraph{Monotonic trend: ascending and descending.} The formulation for monotonic ascending trend can be stated as follows,
\begin{align*}
&D_{zz} X_{zz} + \sum_{j=1}^{z-1}(D_{zj} - D_{z j+1}) X_{zj} + \beta(X_{ii} + X_{zz} - 1)\nonumber\\ &\le 1 + (D_{ii} - 1) X_{ii} + \sum_{j=1}^{i - 1} (D_{ij} - D_{ij+1})X_{ij}, \quad i=2, \ldots, n; \; z=1,\ldots i-1.
\end{align*}
The term $1 + (D_{ii} - 1) X_{ii}$ or simply $1 - X_{ii}$, is used to ensure that event rates are in $[0, 1]$, and the ascending constraint is satisfied even if bin $i$ is not selected. Note that this is a big-$M$ formulation $M + (D_{ii} - M) X_{ii}$ using $M=1$, which suffices given $D \in [0, 1]$, however, a tighter (non integer) $M = \max(\{D_{ij}: i=1, \ldots, n;\; i \ge j\})$, can be used instead. The parameter $\beta$ is the minimum event rate difference between consecutive bins. The term $\beta(X_{ii} + X_{zz} - 1)$ is required to ensure that the difference between two selected bins $i$ and $z$ is greater or equal than $\beta$. Similarly, for the descending constraint,
\begin{align*}
&D_{ii} X_{ii} + \sum_{j=1}^{i-1}(D_{ij} - D_{i j+1})X_{ij} + \beta(X_{ii} + X_{zz} - 1)\nonumber\\ &\le 1 + (D_{zz} - 1) X_{zz} + \sum_{j=1}^{z - 1} (D_{zj} - D_{zj+1})X_{zj}, \quad i=2, \ldots, n; \; z=1,\ldots i-1.
\end{align*}

\paragraph{Monotonic trend: concave and convex.}
The concave and convex trend can be achieved by taking the definition of concavity/convexity on equally spaced points:
\begin{align*}
-x_{i+1} + 2 x_i - x_{i-1} &\ge 0 && \textrm{concave}\\
x_{i+1} - 2 x_i + x_{i-1} &\ge 0 && \textrm{convex}
\end{align*}
Thus, replacing Equation (\ref{event_rate_definition}) in the previous definition of concavity we obtain the concave trend constraints,
\begin{align*}
&-\left(D_{ii}X_{ii} + \sum_{z=1}^{i-1}(D_{iz} - D_{iz+1}) X_{iz}\right) + 2 \left(D_{jj}X_{jj} + \sum_{z=1}^{j-1}(D_{jz} - D_{jz+1}) X_{jz}\right)\nonumber\\
&-\left(D_{kk}X_{kk} + \sum_{z=1}^{k-1}(D_{kz} - D_{kz+1}) X_{kz}\right) \ge X_{ii} + X_{jj} + X_{kk} - 3,
\end{align*}
for $i=3, \ldots n$; $j=2, \ldots, i-1$ and $k=1, \ldots, j-1$. Similarly, for convex trend we get
\begin{align*}
&\left(D_{ii}X_{ii} + \sum_{z=1}^{i-1}(D_{iz} - D_{iz+1}) X_{iz}\right) - 2 \left(D_{jj}X_{jj} + \sum_{z=1}^{j-1}(D_{jz} - D_{jz+1}) X_{jz}\right)\nonumber\\
&\left(D_{kk}X_{kk} + \sum_{z=1}^{k-1}(D_{kz} - D_{kz+1}) X_{kz}\right) \ge X_{ii} + X_{jj} + X_{kk} - 3,
\end{align*}
for $i=3, \ldots n$; $j=2, \ldots, i-1$ and $k=1, \ldots, j-1$. Note that term $X_{ii} + X_{jj} + X_{kk} - 3$ is used the preserve redundancy of constraints when not all bins $i$, $j$ and $k$ are selected, given that $D \in [0, 1]$.

\paragraph{Monotonic trend: peak and valley.}
The peak and valley trend\footnote{In some commercial tools, peak and valley trend are called inverse U-shaped and U-shaped, respectively.} define an event rate function exhibiting a single trend change or reversal. The optimal trend change position is determined by using disjoint constraints, which can be linearized using auxiliary binary variables. The resulting additional constraints are as follows,
\begin{subequations}
\begin{align}
i - n (1 - y_i) &\le t \le i + n y_i, & i = 1, \ldots, n\label{disjoint_constraint}\\
t &\in [0, n]\\
y_i &\in \{0, 1\}, & i = 1, \ldots, n\label{disjoint_aux_variables}
\end{align}
\end{subequations}
where $t$ is the position of the optimal trend change bin, $y_i$ are auxiliary binary variables and $n$ in (\ref{disjoint_constraint}) is the smallest big-$M$ value for this formulation while preserving the redundancy of constraints. Furthermore, for the peak trend we incorporate the following constraints,
\begin{subequations}
\begin{align*}
&y_i + y_z + 1 + (D_{zz} - 1) X_{zz} + \sum_{j=1}^{z-1} (D_{zj} - D_{zj+1}) X_{zj}\nonumber\\
&\ge D_{ii} X_{ii} + \sum_{j=1}^{i-1}(D_{ij} - D_{ij+1}) X_{ij}, \quad i=2, \ldots, n; \; z=1, \ldots, i - 1,\\
& 2 - y_i - y_z + 1 + (D_{ii} - 1) X_{ii} + \sum_{j=1}^{i-1}(D_{ij} - D_{ij+1}) X_{ij}\nonumber\\
&\ge D_{zz} X_{zz} + \sum_{j=1}^{z-1} (D_{zj} - D_{zj+1}) X_{zj}, \quad i=2, \ldots, n; \; z=1, \ldots, i - 1.
\end{align*}
\end{subequations}
Similarly, for the valley trend we include,
\begin{subequations}
\begin{align*}
&y_i + y_z + 1 + (D_{ii} - 1) X_{ii} + \sum_{j=1}^{i-1} (D_{ij} - D_{ij+1}) X_{ij}\nonumber\\
&\ge D_{zz} X_{zz} + \sum_{j=1}^{z-1}(D_{zj} - D_{zj+1}) X_{zj}, \quad i=2, \ldots, n; \; z=1, \ldots, i - 1,\\
& 2 - y_i - y_z + 1 + (D_{zz} - 1) X_{zz} + \sum_{j=1}^{z-1}(D_{zj} - D_{zj+1}) X_{zj}\nonumber\\
&\ge D_{ii} X_{ii} + \sum_{j=1}^{i-1} (D_{ij} - D_{ij+1}) X_{ij}, \quad i=2, \ldots, n; \; z=1, \ldots, i - 1.
\end{align*}
\end{subequations}

Note that none of these constraints are necessary if the position of the change bin $t$ is fixed in advance. For example, given $t$, the valley trend constraints are replaced by two sets of constraints; one to guarantee a descending monotonic trend before $t$ and another to guarantee an ascending monotonic trend after $t$. Devising an effective heuristic to determine the optimal $t$ can yield probably optimal solutions while reducing the problem size significantly.

\subsubsection{Additional constraints}\label{section_additional_contraints}

\paragraph{Reduction of dominating bins.}

To prevent any particular bin from dominating the results, it might also be required that bins have at most a certain number of (total/non-event/event) records using constraints (\ref{min_max_bin_size} - \ref{min_max_bin_n_event}). Furthermore, we might produce more homogeneous solutions by reducing a concentration metric such as the standard deviation of the number of total/non-event/event records among bins. Three concentration metrics are considered: standard deviation, Herfindahl-Hirschman Index (HHI) \cite{Mironchyk2017} and the difference between the largest and smallest bin.

The standard deviation among the number of records for each bin is given by
\begin{equation*}
std = \left(\frac{1}{m - 1} \sum_{i=1}^n \left(\sum_{j=1}^i r_j X_{ij} - \frac{X_{ii}}{m} \sum_{i=1}^n \sum_{j=1}^i r_j X_{ij}\right)^2\right)^{1/2},
\end{equation*}
where $m = \sum_{i=1}^n X_{ii}$ is the optimal number of bins. Let us define the following auxiliary variables
\begin{equation*}
\mu = \frac{1}{m} \sum_{i=1}^n \sum_{j=1}^i r_j X_{ij}, \quad w_i = \sum_{j=1}^i r_j X_{ij} - \mu X_{ii}, \quad i=1, \ldots n.
\end{equation*}
Taking $w = (w_1, \ldots, w_n)^T$, the standard deviation $t$ can be incorporated to the formulation with a different representation. Since $std = (w^T w / (m - 1))^{1/2}$ then
\begin{equation*}
\frac{||w||_2}{(m - 1)^{1/2}} \le t \Longleftrightarrow ||w||_2^2 \le (m - 1) t^2.
\end{equation*}
The non-convex MINLP formulation using the parameter $\gamma$ to control the importance of the term $t$, 
\begin{subequations}
\begin{align}
\underset{X, \mu, w}{\text{max}} \quad & \sum_{i=1}^n V_{ii} X_{ii} + \sum_{j=1}^{i-1} (V_{ij} - V_{ij+1}) X_{ij} - \gamma t \label{reduce_std_objective}\\
\text{s.t.} \quad & \text{(\ref{sum_columns_one} - \ref{binary_indicator})}\\
& \sum_{i=1}^n w_i^2 \le (m - 1) t^2\\
& \mu = \frac{1}{m} \sum_{i=1}^n \sum_{j=1}^i r_j X_{ij}\\
& w_i = \sum_{j=1}^i r_j X_{ij} - \mu X_{ii}, & i=1, \ldots n\\
& m = \sum_{i=1}^n X_{ii}\\
& m \ge 0\\
& \mu \ge 0\\
& w_i \in \mathbb{R}, & i=1, \ldots, n.
\end{align}
\end{subequations}

A widely used metric to quantify concentration is HHI, which can be employed to asses the quality of a binning solution. Lower values of HHI correspond to more homogeneous bins. The HHI of the number of records for each bin is given by
\begin{equation*}
HHI = \frac{1}{r_T^2} \sum_{i=1}^n \left(\sum_{j=1}^i r_j X_{ij}\right)^2,
\end{equation*}
where $r_T = \sum_{i=1}^n r_i$ is the total number of records. The MIQP formulation using the parameter $\gamma$ to control the importance of HHI is stated as 
\begin{subequations}
\begin{align}
\underset{X}{\text{max}} \quad & \sum_{i=1}^n V_{ii} X_{ii} + \sum_{j=1}^{i-1} (V_{ij} - V_{ij+1}) X_{ij} - \frac{\gamma}{r_T^2} \sum_{i=1}^n \left(\sum_{j=1}^i r_j X_{ij}\right)^2 \label{reduce_hhi}\\
\text{s.t.} \quad & \text{(\ref{sum_columns_one} - \ref{binary_indicator})}
\end{align}
\end{subequations}

An effective MILP formulation can be devised using a simplification of the standard deviation approach based on reducing the difference between the largest and smallest bin. The MILP formulation is given by
\begin{subequations}
\begin{align}
\underset{X, p_{\min}, p_{\max}}{\text{max}} \quad & \sum_{i=1}^n V_{ii} X_{ii} + \sum_{j=1}^{i-1} (V_{ij} - V_{ij+1}) X_{ij} - \gamma (p_{\max} - p_{\min}) \label{reduce_diff_max_min_size}\\
\text{s.t.} \quad & \text{(\ref{sum_columns_one} - \ref{binary_indicator})}\\
& p_{\min} \le r_T (1 - X_{ii}) + \sum_{j=1}^i r_j X_{ij}, & i = 1, \ldots, n\\
& p_{\max} \ge \sum_{j=1}^i r_j X_{ij}, & i = 1, \ldots, n\\
& p_{\min} \le p_{\max}\\
& p_{\min} \ge 0.\\
& p_{\max} \ge 0.
\end{align}
\end{subequations}

\paragraph{Maximum p-value constraint.} A necessary constraint to guarantee that event rates between consecutive bins are statistically different is to impose a maximum p-value constraint setting a significance level $\alpha$. Suitable statistical tests are the Z-test, Pearson's Chi-square test or Fisher 's exact test. To perform these statistical tests we require an aggregated matrix of non-event and event records per bin, 
\begin{equation*}
R^{NE}_{ij} = \sum_{z=j}^i r_z^{NE}, \quad R^E_{ij} = \sum_{z=j}^i r_z^E, \quad i=1, \ldots, n;\; j=1, \ldots, i.
\end{equation*}

The preprocessing procedure to detect pairs of pre-bins that do not satisfy the p-value constraints using the Z-test is shown in Algorithm \ref{alg:max_pvalue}.
\begin{algorithm}[htbp]
\caption{Maximum p-value constraint using Z-test}\label{alg:max_pvalue}
\begin{algorithmic}[1]
\Procedure{p-value\_violation\_indices}{$n, R^{NE}, R^E, \alpha$}
\State $zscore = \Phi^{-1}(1 - \alpha / 2)$
\State $\mathcal{I} = \{\}$
\For {$i = 1,\ldots, n-1$}
	\State $l = i + 1$
	\For {$j = 1, \ldots, i$}
		\State $x = R^E_{ij}$
		\State $y = R^{NE}_{ij}$
		\For {$k = l, \ldots, n$}
			\State $w = R^E_{kl}$
			\State $z = R^{NE}_{kl}$
			\If {Z-test$(x, y, w, z) < zscore$}
				\State $\mathcal{I} = \mathcal{I} \cup (i, j, k, l)$
			\EndIf
		\EndFor
	\EndFor
\EndFor
\EndProcedure
\end{algorithmic}
\end{algorithm}

These constraints are added to the formulation by imposing that, at most, one of the bins violating the maximum p-value constraint can be selected. Two cases are considered depending on $j$:
\begin{equation*}
\begin{cases}
X_{ij} + X_{kl} \le 1 + X_{kl-1} & \text{ if } j = 1,\\
X_{ij} + X_{kl} \le 1 + X_{ij-1} + X_{kl-1} & \text{ if } j > 1
\end{cases}, \quad \forall(i, j, k, l) \in \mathcal{I}.
\end{equation*}
These two cases are represented in Figure \ref{violation_diagram}. Finally, note that previous constraints can be rewritten such that they resemble continuity constraints (\ref{flow_continuity}), respectively
\begin{equation*}
X_{ij} + (X_{kl} - X_{kl-1}) \le 1, \quad \text{and} \quad (X_{ij} - X_{ij-1}) + (X_{kl} - X_{kl-1}) \le 1.
\end{equation*}
\begin{figure}[ht]
	\centering
	\includegraphics[scale=0.75]{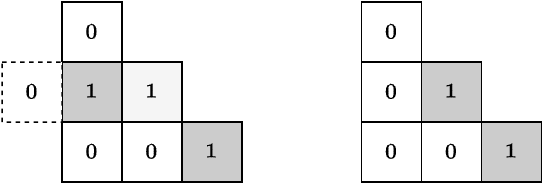}
	\caption{General violation constraint between two feasible solutions. Case $j=1$ (left). Case $j > 1$ (right).}
	\label{violation_diagram}
\end{figure}

\subsubsection{Mixed-integer programming reformulation for local and heuristic search}\label{section_localsolver}

The number of binary decision variables $X$ is $n (n + 1) / 2$. For large $n$ the $\mathcal{NP}$-hardness of the combinatorial optimization problem might limit the success of tree-search techniques. A first approach to tackle this limitation is reformulating the problem to reduce the number of decision variables. First, observe in Figure \ref{basic_diagram} that a solution is fully characterized by the diagonal of $X$. Thus, having the diagonal we can place the ones on the positions satisfying unique assignment (\ref{def_ct_sum_columns_one}) and continuity (\ref{def_ct_flow_continuity}) constraints. On the other hand, to return indexed elements in any aggregated matrix in the original formulation, we require the position of the first one by row. We note that this information can be retrieved by counting the number of consecutive zeros between ones (selected bins) of the diagonal. To perform this operation we use two auxiliary decision variables: an accumulator of preceding zeros $a_i$ and the preceding run-length of zeros $z_i$. A similar approach to counting consecutive ones is introduced in \cite{Kalvelagen2018}. The described approach is illustrated in Figure \ref{heuristic_diagram}.

\begin{figure}[ht]
	\centering
	\includegraphics[scale=0.33]{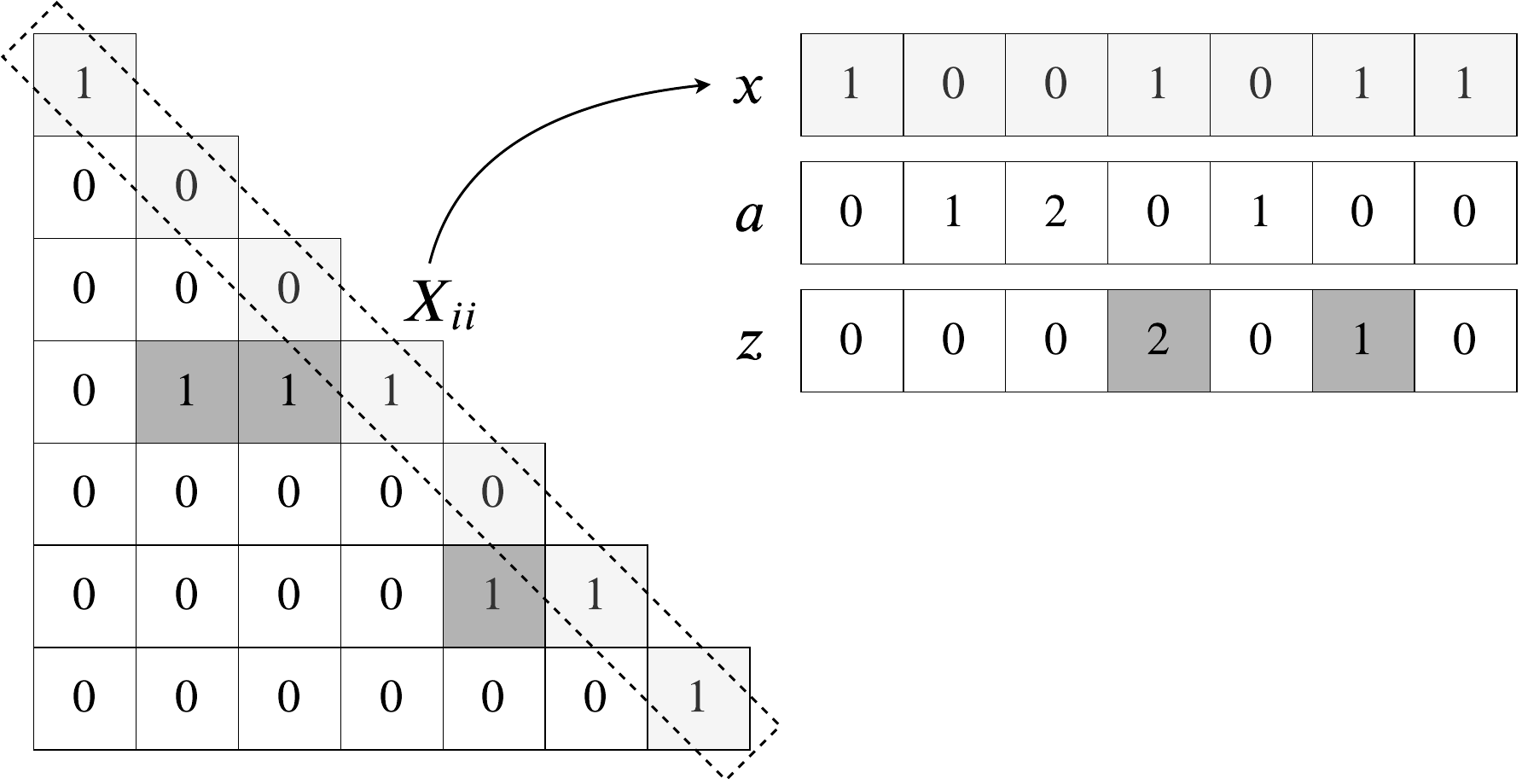}
	\caption{New decision variables suitable for counting consecutive zeros.}
	\label{heuristic_diagram}
\end{figure}

The positions in $z$ are zero-based indexes of the reversed rows of the aggregated matrices. For example, the aggregated lower triangular matrix $R^E$ is now computed backward: $R^E_{ij} = \sum_{z=i}^j r_z^E$ for $i=1, \ldots, n;\; j=1, \ldots, i$. Same for the aggregated matrices $V$, $D$, $R$ and $R^{NE}$. Let us define the parameters of the mathematical programming formulation:
\begin{align*}
& n \in \mathbb{N} && \textrm{number of pre-bins.}\\
& V_{[i, z_i]} \in \mathbb{R}^+_0 && \textrm{Information value.}\\
& D_{[i, z_i]} \in [0, 1] && \textrm{event rate.}\\
& R_{[i, z_i]} \in \mathbb{N} && \textrm{number of records.}\\
& R^{NE}_{[i, z_i]} \in \mathbb{N} && \textrm{number of non-event records.}\\
& R^E_{[i, z_i]} \in \mathbb{N} && \textrm{number of event records.}\\
& r^{NE}_{\min} \in \mathbb{N}  && \textrm{minimum number of non-event records per bins.}\\
& r^{NE}_{\max} \in \mathbb{N}  && \textrm{maximum number of non-event records per bins.}\\
& r^E_{\min} \in \mathbb{N}  && \textrm{minimum number of event records per bins.}\\
& r^E_{\max} \in \mathbb{N}  && \textrm{maximum number of event records per bins.}\\
& b_{\min} \in \mathbb{N}  && \textrm{minimum number of bins.}\\
& b_{\max} \in \mathbb{N}  && \textrm{maximum number of bins.}
\end{align*}
and the decision variables:
\begin{align*}
& x_i \in \{0, 1\} && \textrm{binary indicator variable.}\\
& a_i \in \mathbb{N}_0 && \textrm{accumulator of preceding zeros.}\\
& z_i \in \mathbb{N}_0 && \textrm{preceding run-length of zeros.}
\end{align*}

The new formulation with $3n$ decision variables is stated as follows
\begin{subequations}
\begin{align}
\underset{X}{\text{max}} \quad & \sum_{i=1}^n V_{[i, z_i]} x_i\\\label{new_objective_cp}
\text{s.t.} \quad & x_n = 1\\
\quad & a_i = (a_{i-1} + 1) (1-x_i), & i= 1, \ldots, n\\\label{accum_zero_cp}
\quad & z_i = a_{i-1} (1-x_{i-1}) x_i, & i= 1, \ldots, n\\\label{count_zero_cp}
\quad & b_{\min} \le \sum_{i=1}^n x_i \le b_{\max}\\
\quad & r_{\min} \le \sum_{i=1}^n R_{[i, z_i]} x_i \le r_{\max}, & i = 1, \ldots, n\\
\quad & r^{NE}_{\min} \le \sum_{i=1}^n R^{NE}_{[i, z_i]} x_i \le r^{NE}_{\max}, & i = 1, \ldots, n\\
\quad & r^E_{\min} \le \sum_{i=1}^n R^E_{[i, z_i]} x_i \le r^E_{\max}, & i = 1, \ldots, n\\
\quad & x_i \in \{0, 1\}, & i = 1, \ldots, n\\
\quad & a_i \in \mathbb{N}_0, & i = 1, \ldots, n\\
\quad & z_i \in \mathbb{N}_0, & i = 1, \ldots, n\label{z_cp}
\end{align}
\end{subequations}
This MINLP formulation is particularly suitable for Local Search (LS) and heuristic techniques, where decision variable $z_i$ can be used as an index, for example, $V_{[i, z_i]}$. The nonlinear constraints (\ref{accum_zero_cp}) and (\ref{count_zero_cp}) are needed for counting consecutive zeros. After the linearization of these constraints via big-$M$ inequalities or indicator constraints \cite{Kalvelagen2018}, the formulation is adequate for Constraint Programming (CP). Additional constraints such as monotonicity constraints can be incorporated to (\ref{new_objective_cp} - \ref{z_cp}) in a relatively simple manner: 

Monotonic trend ascending
\begin{equation*}
D_{[i, z_i]} x_i + 1 - x_i \ge D_{[j, z_j]} x_j + \beta (x_i + x_j - 1), \quad i=2, \ldots, n;\; z=1, \ldots, i - 1.
\end{equation*}

Monotonic trend descending
\begin{equation*}
D_{[i, z_i]} x_i + \beta (x_i + x_j - 1) \le 1 - x_j + D_{[j, z_j]} x_j, \quad i=2, \ldots, n;\; z=1, \ldots, i - 1.
\end{equation*}

Monotonic trend concave
\begin{equation*}
-D_{[i, z_i]} x_i + 2 D_{[j, z_j]} x_j - D_{[k, z_k]} x_k \ge x_i + x_j + x_k - 3,
\end{equation*}
for $i=3, \ldots, n;\; j=2, \ldots, i - 1;\; k=1, \ldots, j - 1$.

Monotonic trend convex
\begin{equation*}
D_{[i, z_i]} x_i - 2 D_{[j, z_j]} x_j + D_{[k, z_k]} x_k \ge x_i + x_j + x_k - 3,
\end{equation*}
for $i=3, \ldots, n;\; j=2, \ldots, i - 1;\; k=1, \ldots, j - 1$.

Monotonic trend peak: constraints (\ref{disjoint_constraint} - \ref{disjoint_aux_variables}) and 
\begin{subequations}
\begin{align*}
y_i + y_j + 1 + (D_{[j, z_j]} - 1) x_j - D_{[i, z_i]} x_i &\ge 0\\
2 - y_i -y_j + 1 + (D_{[i, z_i]} - 1) x_i - D_{[j, z_j]} x_j &\ge 0,
\end{align*}
\end{subequations}
for $i=2, \ldots, n;\; z=1, \ldots, i - 1$.

Monotonic trend valley: constraints (\ref{disjoint_constraint} - \ref{disjoint_aux_variables}) and 
\begin{subequations}
\begin{align*}
y_i + y_j + 1 + (D_{[i, z_i]} - 1) x_i - D_{[j, z_j]} x_j & \ge 0\\
2 - y_i -y_j + 1 + (D_{[j, z_j]} - 1) x_j - D_{[i, z_i]} x_i &\ge 0
\end{align*}
\end{subequations}
for $i=2, \ldots, n;\; z=1, \ldots, i - 1$.

In Section \ref{section_experiments}, we compare the initial CP/MIP formulation to the presented LS formulation for large size instances. 

\subsection{Mixed-integer programming formulation for continuous target}

The presented optimal binning formulation given a binary target can be seamlessly extended to a continuous target. Following the methodology developed for Equations (\ref{iv_obj_params}) and (\ref{iv_obj_definition}), we could adapt the IV statistic for a continuous target as described in \cite{FICO2014}
\begin{equation*}
IV = \sum_{i=1}^n \left|\mu - u_{i} \right| \frac{r_i}{r_T},
\end{equation*}
where $\mu \in \mathbb{R}$ is the target mean for all records (global target mean), $r_T$ is total number of records, and $u_i$ and $r_i$ are the target mean and number of records for each bin, respectively. The goal of this metric is to obtain bins with a target mean as different as possible from the global target mean.  However, we note that the usage of this metric as objective function might tend to reduce granularity by selecting only a few bins with large differences. Therefore, we use only the $p$-norm distance ($L_1$-norm of $L_2$-norm) term and exclude the relative number of records term. An aggregated lower triangular matrix $L_{ij} \in \mathbb{R}^+_0, \forall (i, j) \in \{1, \ldots, n: i \ge j\}$ can be pre-computed as follows,
\begin{equation*}
L_{ij} = \left\|\mu - U_{ij}\right\|_p, \quad U_{ij} = \frac{\sum_{z=i}^j s_z}{\sum_{z=i}^j r_z},
\end{equation*}
where $U_{ij} \in \mathbb{R}$ is the aggregated matrix of target mean values, and $s_i$ is the sum of target values for each pre-bin. A more robust approach might replace the mean by an order statistic, but unfortunately, the aggregated matrix computed from the pre-binning data would require approximation methods. Finally, replacing $L_{ij}$ in the objective function (\ref{objective}), the resulting formulation is given by
\begin{subequations}
\begin{align}
\underset{X}{\text{max}} \quad & \sum_{i=1}^n L_{ii} X_{ii} + \sum_{j=1}^{i-1} (L_{ij} - L_{ij+1}) X_{ij}\\
\text{s.t.} \quad & \text{(\ref{sum_columns_one} - \ref{binary_indicator})}
\end{align}
\end{subequations}

\subsubsection{Monotonicity constraints}
As for the binary target case, we can impose monotonicity constraints between the mean value of consecutive bins. Since establishing tight bounds for $U_{ij} \in \mathbb{R}$ is not trivial, we discard a big-$M$ formulation. Another traditional technique such as the use of SOS1 sets is also discarded to avoid extra variables and constraints. Instead, we state the ascending monotonic trend in double implication form as follows 
\begin{align*}
&X_{ii} = 1 \textrm{ and } X_{zz} = 1 \Longrightarrow
U_{zz} X_{zz} + \sum_{j=1}^{z-1}(U_{zj} - U_{z j+1}) X_{zj} + \beta\nonumber\\ &\le U_{ii} X_{ii} + \sum_{j=1}^{i - 1} (U_{ij} - U_{ij+1})X_{ij}, \quad i=2, \ldots, n; \; z=1,\ldots i-1.
\end{align*}
The enforced constraint must be satisfied iff the two literals $X_{ii}$ and $X_{zz}$ are true, otherwise the constraint is ignored. This is half-reified linear constraint \cite{Feydy2011}. The parameter $\beta$ is the minimum mean difference between consecutive bins. Similarly, for the descending constraint,
\begin{align*}
&X_{ii} = 1 \textrm{ and } X_{zz} = 1 \Longrightarrow U_{ii} X_{ii} + \sum_{j=1}^{i-1}(U_{ij} - U_{i j+1})X_{ij} + \beta\nonumber\\ 
&\le U_{zz} X_{zz} + \sum_{j=1}^{z - 1} (U_{zj} - U_{zj+1})X_{zj}, \quad i=2, \ldots, n; \; z=1,\ldots i-1.
\end{align*}

Furthermore, the concave and convex trend can be written in triple implication form using literals $X_{ii}$, $X_{jj}$ and $X_{kk}$. The concave trend constraints are
\begin{align*}
X_{ii} = 1,  X_{zz} = 1 \textrm{ and } X_{kk} = 1 \Longrightarrow 
&-\left(U_{ii}X_{ii} + \sum_{z=1}^{i-1}(U_{iz} - U_{iz+1}) X_{iz}\right)\nonumber\\ 
& + 2 \left(U_{jj}X_{jj} + \sum_{z=1}^{j-1}(U_{jz} - U_{jz+1}) X_{jz}\right)\nonumber\\
&-\left(U_{kk}X_{kk} + \sum_{z=1}^{k-1}(U_{kz} - U_{kz+1}) X_{kz}\right) \ge 0,
\end{align*}
for $i=3, \ldots n$; $j=2, \ldots, i-1$ and $k=1, \ldots, j-1$. Similarly, for convex trend we get
\begin{align*}
X_{ii} = 1,  X_{zz} = 1 \textrm{ and } X_{kk} = 1 \Longrightarrow 
&\left(U_{ii}X_{ii} + \sum_{z=1}^{i-1}(U_{iz} - U_{iz+1}) X_{iz}\right)\nonumber\\ 
& - 2 \left(U_{jj}X_{jj} + \sum_{z=1}^{j-1}(U_{jz} - U_{jz+1}) X_{jz}\right)\nonumber\\
&\left(U_{kk}X_{kk} + \sum_{z=1}^{k-1}(U_{kz} - U_{kz+1}) X_{kz}\right) \ge 0,
\end{align*}
for $i=3, \ldots n$; $j=2, \ldots, i-1$ and $k=1, \ldots, j-1$.

The formulation can be extended to support valley and peak trend. The peak trend requires constraints (\ref{disjoint_constraint} - \ref{disjoint_aux_variables}) and 
\begin{subequations}
\begin{align*}
X_{ii} = 1 \textrm{ and } X_{zz} = 1 \Longrightarrow
&M (y_i + y_z) + U_{zz} X_{zz} + \sum_{j=1}^{z-1} (U_{zj} - U_{zj+1}) X_{zj}\nonumber\\
&\ge U_{ii} X_{ii} + \sum_{j=1}^{i-1}(U_{ij} - U_{ij+1}) X_{ij}, \\
& M (2 - y_i - y_z) + U_{ii} X_{ii} + \sum_{j=1}^{i-1}(U_{ij} - U_{ij+1}) X_{ij}\nonumber\\
&\ge U_{zz} X_{zz} + \sum_{j=1}^{z-1} (U_{zj} - U_{zj+1}) X_{zj},
\end{align*}
\end{subequations}
for $i=2, \ldots, n; \; z=1, \ldots, i - 1$. The big-$M$ formulation to handle disjoint constraints in (\ref{disjoint_constraint} - \ref{disjoint_aux_variables}) requires an effective bound, we suggest $M= \max(\{|U_{ij}|: i=1, \ldots, n: i \ge j\})$. Similarly, for the valley trend we include constraints  and 
\begin{subequations}
\begin{align*}
X_{ii} = 1 \textrm{ and } X_{zz} = 1 \Longrightarrow
&M (y_i + y_z) + U_{ii} X_{ii} + \sum_{j=1}^{i-1} (U_{ij} - U_{ij+1}) X_{ij}\nonumber\\
&\ge U_{zz} X_{zz} + \sum_{j=1}^{z-1}(U_{zj} - U_{zj+1}) X_{zj},\\
& M(2 - y_i - y_z) + U_{zz} X_{zz} + \sum_{j=1}^{z-1}(U_{zj} - U_{zj+1}) X_{zj},\nonumber\\
&\ge U_{ii} X_{ii} + \sum_{j=1}^{i-1} (U_{ij} - U_{ij+1}) X_{ij},
\end{align*}
\end{subequations}
for $i=2, \ldots, n; \; z=1, \ldots, i - 1$.

\subsubsection{Additional constraints}

\paragraph{Maximum p-value constraint.} Similar to the binary target case, we can impose a maximum p-value constraint between consecutive bins to ensure that their means are statistically different. In this case, the T-test for means is appropriate, although it requires the standard deviation. To compute an aggregate matrix of standard deviations, $SD_{ij}$, we could use 
\begin{equation*}
SD_{ij} = \frac{\sum_{z=i}^j ss_z}{\sum_{z=i}^j r_z} - \left(\frac{\sum_{z=i}^j s_z}{\sum_{z=i}^j r_z}\right)^2,
\end{equation*}
where $ss_i$ is the sum of squared target values for each pre-bin.

Finally, the algorithm and procedure described in Section \ref{section_additional_contraints} to incorporate these constraints can be readily reused.

\subsection{Mixed-integer programming formulation for multi-class target}

A simple approach to support a multi-class target is to use the one-vs-rest scheme with $n_C$ distinct classes. This scheme consists of building a binary target for each class. The resulting mathematical formulation closely follows the formulation for binary target,
\begin{subequations}
\begin{align}
\underset{X}{\text{max}} \quad & \sum_{c=1}^{n_C}\sum_{i=1}^n V^c_{ii} X_{ii} + \sum_{j=1}^{i-1} (V^c_{ij} - V^c_{ij+1}) X_{ij} \label{multiclass_objective}\\
\text{s.t.} \quad & \text{(\ref{sum_columns_one} - \ref{min_max_bin_size})}
\end{align}
\end{subequations}
Note that for this formulation we need an aggregated matrix $V$ and $D$ for each class $c$. It is important to emphasize that the monotonicity constraints in Section \ref{section_monotonicity} act as linking constraints among classes, otherwise, $n_C$ optimal binning problems with binary target could be solved separately. Again, additional constraints described in Section \ref{section_additional_contraints} can be naturally incorporated with minor changes.

\section{Algorithmic details and implementation}\label{section_algorithms}

\subsection{Automatic monotonic trend algorithm}
Our approach to automate the monotonic trend decision employs a Machine Learning (ML) classifier that predicts, given the pre-binning data, the most suitable monotonic trend to maximize discriminatory power. In particular, we aim to integrate an off-line classifier, hence we are merely interested in ML classification algorithms easily embeddable. Recently, a similar approach was implemented in the commercial solver CPLEX  to make automatic decisions over some algorithmic choices \cite{bonami2018}.

For this study, a dataset is generated with 417 instances collected from public datasets. We design a set of 16 numerical features, describing the pre-binning instances in terms of number of pre-bins, distribution of records per pre-bin and trend features. The most relevant trend features are: number of trend change points, linear regression coefficient sense, area of the convex hull, and area comprised among extreme trend points.

The labeling procedure consists of solving all instances selecting the ascending (A), descending (D), peak (P) and valley (V) monotonic trend. Concave and convex trends are discarded due to being a special case of peak and valley, respectively. In what follows, without loss of generality, we state the procedure for the binary target case: if the relative difference between IV with ascending/descending monotonic trend and IV with peak/valley trend is less than 10\%, the ascending/descending monotonic trend is selected, due to the lesser resolution times. Table \ref{table_dataset} summarizes the composition of the dataset with respect to assigned labels. We note that the dataset is slightly unbalanced, being predominant the descending label (D).

\begin{table}[htpb]
	\centering
	\scalebox{0.8}{
\begin{tabular}{ccc}
\hline
\bf{Label} & \bf{Instances} & \bf{Frequency (\%)}\\
\hline
	A & 84 & 20\\
	D & 200 & 48\\
	P & 76 & 18\\
	V & 57 & 14\\
	\hline
	\bf{Total} & \bf{417}\\
	\hline
\end{tabular}}
\caption{Number of instances and percentage for each label.}
\label{table_dataset}
\end{table}

To perform experiments, the dataset is split into train and test subsets in a stratified manner to treat unbalanced data. The proportion of the test set is 30\%. Three interpretable multi-class classification algorithms are tested, namely, logistic regression, decision trees (CART) and Support vector Machine (SVM) using the Python library Scikit-learn \cite{scikit-learn}. All three algorithms are trained using option \texttt{class\_weight="balanced"}. Throughout the learning process, we discard 8 features and perform hyperparameter optimization for all three algorithms. These experiments show that SVM and CART have similar classification measures, and we decide to choose CART (max depth 5) to ease implementation.

On the test set, the trained CART has a weighted average accuracy, precision and recall of 88\%. See classification measures and the confusion matrix in Table \ref{table_measures_confusion}. We observe that various instances of the minority class (V) are misclassified, indicating that more instances or new features might be required to improve classification measures. Improving this classifier is part of ongoing research.

\begin{table}[htpb]
	\centering
	\scalebox{0.8}{
\begin{tabular}{ll}
\begin{tabular}{ccccc}
\hline
\bf{Label} & \bf{Precision} & \bf{Recall} & \bf{F1-score} & \bf{Support}\\
\hline
	A & 0.85 & 0.88 & 0.86 & 25\\
	D & 0.97 & 0.93 & 0.95 & 61\\
	P & 0.81 & 0.88 & 0.88 & 23\\
	V & 0.71 & 0.59 & 0.65 & 17\\
	\hline
weighted avg & 0.88 & 0.88 & 0.88 & 126\\
	\hline
\end{tabular}
\hspace{1cm}
\begin{tabular}{ccccc}
\hline
& \bf{A} & \bf{D} & \bf{P} & \bf{V}\\
\hline
\bf{A} & 22 & 0 & 1 & 2\\
\bf{D} & 0 & 57 & 2 & 2\\
\bf{P} & 1 & 0 & 22 & 0\\
\bf{V} & 3 & 2 & 2 & 10\\
\hline
\end{tabular}
\end{tabular}}
\caption{Classification measures (left) and confusion matrix (right) for CART on the test set.}
\label{table_measures_confusion}
\end{table}

\subsection{Presolving algorithm}

The mathematical programming formulation is a hard combinatorial optimization problem that does not scale well as the number of pre-bins increases. To reduce solution times we need to reduce the search space to avoid deep tree searches during branching. The idea is to develop a presolving algorithm to fix bins not satisfying monotonicity constraints, after that the default presolver may be able to reduce the problem size significantly. The presolving algorithm applies to the binary target case and was developed after several observations about the aggregated matrix of event rates $D$. Algorithm \ref{alg:asc_mono} shows the implemented approach for the ascending monotonicity trend. Presolving algorithm for the descending monotonicity is analogous, only requiring inequalities change.

\begin{algorithm}[H]
\caption{Preprocessing ascending monotonicity}\label{alg:asc_mono}
\begin{algorithmic}[1]
\Procedure{PreprocessingAscending}{$D, X, \beta$}
\For {$i = 1,\ldots, n-1$}
	\If {$D_{i+1, i} - D_{i+1, i+1} > 0$}
		\State fix $X_{i,i} = 0$
	\EndIf
	\For {$j = 1,\ldots, n-i-1$}
		\If {$D_{i+1+j, i} - D_{i+1+j, i+1+j} > 0$}
			\State fix $X_{i+j, i+j} = 0$
		\EndIf
	\EndFor
\EndFor
\EndProcedure
\end{algorithmic}
\end{algorithm}

\subsection{Binning quality score}
To assess the quality of binning for binary target, we develop a binning quality score considering the following aspects:
\begin{itemize}
\item Predictive power: IV rule of thumb \cite{Siddiqi2005} in Table \ref{IV_rule_of_thumb}.
\item Statistical significance: bin event rates must be statistically different, therefore large p-values penalize the quality score.
\item Homogeneity: binning with homogeneous bin sizes or uniform representativeness, increases reliability.
\end{itemize}

\begin{table}[H]
	\centering
	\scalebox{0.8}{
\begin{tabular}{cccccccc}
\hline
\bf{IV}  & \bf{predictive power}\\
\hline
$[0,  0.02)$ & not useful\\
$[0.02, 0.1)$ & weak\\
$[0.1, 0.3)$ & medium\\
$[0.3, 0.5)$ & strong\\
$[0.5,  \infty)$ & over-prediction\\
\hline
	\end{tabular}}
	\caption{Information Value rule of thumb.}
	\label{IV_rule_of_thumb}
\end{table}

To account for all these aspects, we propose a rigorous binning quality score function
\begin{proposition}
Given a binning with Information Value $\nu$, p-values between consecutive bins $p_i$, $i=1, \ldots, n - 1$ and normalized bins size $s_i$, $i=1, \ldots, n$, the binning quality score function is defined as
\begin{equation}
Q(\nu, p, s) = \frac{\nu}{c} \exp\left(-\nu^2/ (2c^2) + 1/2\right) \left(\prod_{i=1}^{n-1} (1 - p_i)\right) \left(\frac{1 - \sum_{i=1}^n s_i^2}{1 - 1/n}\right),
\end{equation}
where $Q(\nu, p, s) \in [0, 1]$ and $c = \frac{1}{5}\sqrt{\frac{2}{\log(5/3)}}$ is the best a priori IV value in $[0.3, 0.5)$.
\end{proposition}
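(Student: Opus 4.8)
The plan is to factor $Q$ as a product of three blocks, depending on the arguments $\nu$, $p$, and $s$ separately, and to show that each block lies in $[0,1]$; since a finite product of numbers in $[0,1]$ again lies in $[0,1]$, this proves the claim. Write $Q(\nu,p,s)=g(\nu)\,\Pi(p)\,H(s)$ with
\[
g(\nu)=\frac{\nu}{c}\exp\!\left(-\frac{\nu^{2}}{2c^{2}}+\frac12\right),\qquad
\Pi(p)=\prod_{i=1}^{n-1}(1-p_i),\qquad
H(s)=\frac{1-\sum_{i=1}^{n}s_i^{2}}{1-1/n}.
\]
Here I would record the admissible ranges of the inputs: $\nu\ge 0$ since the Information Value is a sum of nonnegative Kullback--Leibler contributions; $p_i\in[0,1]$ since they are p-values; $s_i\ge 0$ with $\sum_{i=1}^{n}s_i=1$ since they are normalized bin sizes; and $n\ge 2$ (for $n=1$ there are no consecutive-bin p-values and $H$ is the indeterminate $0/0$, so that case is excluded).

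Next I would bound the three blocks. The factor $\Pi(p)$ is immediate, since each $1-p_i\in[0,1]$. For $H(s)$, the Herfindahl sum obeys $\sum_i s_i^{2}\le\sum_i s_i=1$ (using $s_i\le 1$) and $\sum_i s_i^{2}\ge\big(\sum_i s_i\big)^{2}/n=1/n$ by the Cauchy--Schwarz (equivalently QM--AM) inequality, so $1-\sum_i s_i^{2}\in[0,\,1-1/n]$ and, dividing by $1-1/n>0$, $H(s)\in[0,1]$. For $g$, we have $g\ge 0$, $g(0)=0$, $g(\nu)\to 0$ as $\nu\to\infty$, and
\[
g'(\nu)=\frac{1}{c}\exp\!\left(-\frac{\nu^{2}}{2c^{2}}+\frac12\right)\!\left(1-\frac{\nu^{2}}{c^{2}}\right),
\]
which on $[0,\infty)$ vanishes only at $\nu=c$, changing sign from $+$ to $-$; hence $\nu=c$ is the global maximizer with $g(c)=\exp(-\tfrac12+\tfrac12)=1$, so $g(\nu)\in[0,1]$ (in fact $g$ is a Rayleigh density with scale $c$, renormalized so that its mode value equals $1$). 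Multiplying the three bounds yields $Q(\nu,p,s)\in[0,1]$.

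Finally I would justify the stated value of $c$. By the computation above, the predictive-power factor $g$ peaks at $\nu=c$; requiring that this factor take equal values at the two endpoints $\nu=0.3$ and $\nu=0.5$ of the ``strong'' IV band of Table~\ref{IV_rule_of_thumb} gives, after cancelling the common $e^{1/2}/c$, the relation $\tfrac35=\exp(-0.08/c^{2})$, i.e.\ $c^{2}=0.08/\log(5/3)=2/(25\log(5/3))$, which is precisely $c=\tfrac15\sqrt{2/\log(5/3)}\approx 0.396\in[0.3,0.5)$ --- the sense in which $c$ is the ``best a priori IV value''. There is no genuine obstacle: the only points requiring care are restricting attention to $\nu\ge 0$ so that the interior critical point $\nu=c$ is actually the global maximum of $g$, and noting the harmless convention $n\ge 2$ so that $1-1/n>0$; the remainder is just assembling the three elementary estimates and the identification of $c$ cleanly.
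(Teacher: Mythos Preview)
Your proof is correct and follows essentially the same route as the paper: both identify the $\nu$-factor as a Rayleigh density rescaled so its mode at $\nu=c$ has height $1$, and both obtain $c$ by imposing $g(0.3)=g(0.5)$, yielding $c=\tfrac{1}{5}\sqrt{2/\log(5/3)}$. Your treatment is in fact a bit more complete, since you explicitly verify $\Pi(p)\in[0,1]$ and, via Cauchy--Schwarz, $H(s)\in[0,1]$, whereas the paper only identifies the last factor as $1-\mathrm{HHI}^{*}$ without spelling out the bound.
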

\begin{proof}
Given the rule of thumb in Table \ref{IV_rule_of_thumb}, let us consider the set of statistical distributions with positive skewness, positive fat-tail, and support on the semi-infinite interval $[0, \infty)$. The function should penalize large values of Information Value $\nu$, and fast decay is expected after a certain threshold indicating over-prediction. This fast decay is a required property that must be accompanied by the following statement: $\lim_{\nu\to 0} f(\nu) = \lim_{\nu \to \infty} f(\nu) = 0$ Among the available distributions satisfying aforementioned properties, we select the Rayleigh distribution, which probability density function is given by
\begin{equation*}
f(\nu;c) = \frac{\nu}{c^2} e^{-\nu^2 / (2c^2)}, \quad \nu\ge 0.
\end{equation*}

This is a statistical distribution, not a function, hence we need a scaling factor so that $\max_{\nu \in [0, \infty)}f(\nu; c) = 1$: the maximum value of a unimodal probability distribution is the mode $c$, thus
\begin{equation*}
\gamma = f(c,c) = \frac{1}{c \sqrt{e}} \Longrightarrow \frac{f(\nu,c)}{\gamma} =  \frac{\nu \exp\left(-\nu^2/ (2c^2) + 1/2\right)}{c}.
\end{equation*}
The optimal $c$ such that $f(a) = f(b)$ for $b > a$ can be obtained by solving $f(b; c) - f(a; c) = 0$ for $c$, which yields
\begin{equation*}
c^* = \frac{\sqrt{b^2 - a^2}}{\sqrt{2\log(b/a)}}.
\end{equation*}
Term $\prod_{i=1}^{n-1} (1 - p_i)$ assesses the statistical significance of the bins. Furthermore, term $\frac{1 - \sum_{i=1}^n s_i^2}{1 - 1/n} = 1 - HHI^*$, where $HHI^*$ is the normalized Herfindahl Hirschman Index, assesses the homogeneity/uniformity of the bin sizes.
\end{proof}

For example, if we consider that the boundaries of the interval with strong IV predictive power in Table \ref{IV_rule_of_thumb}, $a = 0.3$ and $b = 0.5$, should produce the same quality score, $c^*(a,b) = \frac{1}{5}\sqrt{\frac{2}{\log(5/3)}}$. Table \ref{table_iv_metrics} shows the value of $f(\nu, c^*)$ for various IV values $\nu$; note the fast decay of $f(\nu, c^*)$ when $\nu > 0.5$.
\begin{table}[H]
	\centering
	\scalebox{0.8}{
\begin{tabular}{cccccccccc}
\hline
\textbf{$\nu$}  & \boldmath{$0$} &  \boldmath{$0.02$} &  \boldmath{$0.1$} &  \boldmath{$0.3$} &  \boldmath{$0.5$} &  \boldmath{$0.7$}  &  \boldmath{$0.9$}  &  \boldmath{$1$}  &  \boldmath{$1.5$}\\
\hline
	$f(\nu, c^*)$ & 0 & 0.083 & 0.404 & 0.938 & 0.938 & 0.610 & 0.282 & 0.171 & 0.005\\
	\hline
\end{tabular}}
\caption{Function values for various $\nu$ values.}
\label{table_iv_metrics}
\end{table}

\subsection{Implementation}

The presented mathematical programming formulations are implemented using Google OR-Tools \cite{ortools} with the open-source MILP solver CBC \cite{cbc_2018}, and Google's BOP and CP-SAT solvers. Besides, the specialized formulation in Section \ref{section_localsolver} is implemented using the commercial solver LocalSolver \cite{localsolver2011}. The python library OptBinning\footnote{\url{https://github.com/guillermo-navas-palencia/optbinning}} has been developed throughout this work to ease usability and reproducibility.

Much of the implementation effort focuses on the careful implementation of constraints and the development of fast algorithms for preprocessing and generating the model data. A key preprocessing algorithm is a pre-binning refinement developed to guarantee that no bins have 0 non-events and events records in the binary target case.

Categorical variables require special treatment: pre-bins are ordered in ascending order with respect to a given metric; the event rate for binary target and the target mean for a continuous target. The original data is replaced by the ordered indexes and is then used as a numerical (ordinal) variable. Furthermore, during preprocessing, the non-representative categories may be binned into an ``others'' bin. Similarly, missing values and special values are incorporated naturally as additional bins after the optimal binning is terminated.

\section{Experiments}\label{section_experiments}

The experiments were run on an Intel(R) Core(TM) i5-3317 CPU at 1.70GHz, using a single core, running Linux. Two binning examples are shown in Tables \ref{fico_example} and \ref{kaggle_example}, using Fair Isaac (FICO) credit risk dataset \cite{FICO2018} ($N=10459$) and Home Credit Default Risk Kaggle competition dataset \cite{HomeCreditGroup2018} ($N=307511$), respectively.

Example in Table \ref{fico_example} uses the variable AverageMInFile (Average Months in File) as an risk driver. FICO dataset imposes monotonicity constraints to some variables, in particular, for this variable, the event rate must be monotonically decreasing. Moreover, the dataset includes three special values/codes defined as follows:
\begin{itemize}
\item -9: No Bureau Record or No Investigation
\item -8: No Usable/Valid Trades or Inquiries
\item -7: Condition not Met (e.g. No Inquiries, No Delinquencies)
\end{itemize}
For the sake of completeness, we also include a few random missing values on the dataset. As shown in Table \ref{fico_example}, these values are separately treated by incorporating a Special and Missing bin. Regarding computation time, this optimal binning instance is solved in 0.08 seconds. The optimization time accounts for 91\% of the total time, followed by the pre-binning time representing about 6\%. The remaining 3\% is spent in pre-processing and post-processing operations.

\begin{table}[htpb]
	\centering
	\scalebox{0.75}{
\begin{tabular}{cccccccccc}
\hline
\bf{Bin}  & \bf{Count} & \bf{Count (\%)} & \bf{Non-event} & \bf{Event} & \bf{Event rate} & \bf{WoE} & \bf{IV} & \bf{JS}\\
\hline
$(-\infty,  30.5)$ & 544 & 0.052013 & 99 & 445 & 0.818015 & -1.41513 & 0.087337	& 0.010089\\
$[30.5, 48.5)$ & 1060 & 0.101348 & 286 & 774 &	0.730189 & -0.907752	& 0.076782	& 0.009281\\
$[48.5, 54.5)$ & 528&	0.050483&	184&	344&	0.651515	&-0.537878&	0.014101	&0.001742\\
$[54.5, 64.5)$ &	1099	&0.105077	&450&	649	&0.590537	&-0.278357	&0.008041	&0.001002\\
$[64.5, 70.5)$&	791	&0.075629	&369&	422	&0.533502	&-0.046381	&0.000162	&0.000020\\
$[70.5, 74.5)$ &	536	&0.051248	&262&	274	&0.511194	&0.0430441	&0.000095	&0.000012\\
$[74.5, 81.5)$ 	&912 &	0.087198&	475	&437&	0.479167	&0.171209	&0.002559	&0.000320\\
$[81.5, 101.5)$	&2009	&0.192083	&1141	&868&	0.432056	&0.361296	&0.025000	&0.003108\\
$[101.5, 116.5)$	& 848	&0.081078	&532&	316	&0.372642	&0.608729	&0.029532	&0.003636\\
$[116.5, \infty)$	&1084	&0.103643	&702&	382	&0.352399	&0.696341	&0.049039	&0.006009\\
Special & 558	&0.053351	&252&	306	&0.548387	&-0.106328	&0.000601	&0.000075\\
Missing & 490 &	0.046850&	248	&242&	0.493878&	0.112319&	0.000592	&0.000074\\
\hline
	\end{tabular}}
	\caption{Example optimal binning using variable AverageMInFile from FICO dataset.}
	\label{fico_example}
\end{table}

Example in Table \ref{kaggle_example} uses the categorical variable ORGANIZATION\_TYPE from the Kaggle dataset. This variable has 58 categories, and we set the non-representative categories cut-off to $0.01$. Note that the bin just before the Special bin corresponds to the bin with non-representative categories, which is excluded from the optimization problem, hence monotonicity constraint does not apply. This optimal binning instance is solved in 0.25 seconds. For categorical variables, most of the time is spent on pre-processing, 71\% in this particular case, whereas the optimization problem is solved generally faster.

\begin{table}[H]
	\centering
	\scalebox{0.75}{
\begin{tabular}{cccccccccc}
\hline
\bf{Bin}  & \bf{Count} & \bf{Count (\%)} & \bf{Non-event} & \bf{Event} & \bf{Event rate} & \bf{WoE} & \bf{IV} & \bf{JS}\\
\hline
[XNA, School] & 64267 &0.208991 &60751	&3516	&0.054709	&0.416974	&0.030554	&0.003792\\
$[$Medicine, ...$]$ & 31845	& 0.103557	& 29673	& 2172	& 0.068205	& 0.182104 & 	0.003182 & 	0.000397\\
$[$Other$]$ & 16683 &	0.054252& 15408 &	1275	 & 0.076425	& 0.0594551 &	0.000187	& 0.000023\\
$[$Business ...$]$ & 16537	&0.053777 &	15150	& 1387	& 0.083873	& -0.0416281 &	0.000095&	0.000012\\
$[$Transport: ...$]$ & 81221	& 0.264124	&73657	&7564	&0.093129	&-0.156466	&0.006905	&0.000862\\
$[$Security, ...$]$ &55150	&0.179343	&49424	&5726	&0.103826	&-0.277067	&0.015465	&0.001927\\
$[$Housing, ...$]$ & 41808	&0.135956	&38623	&3185	&0.076182	&0.0629101	&0.000524	&0.000065\\
Special & 0 & 0 & 0 & 0 & 0 & 0 & 0 & 0\\
Missing & 0 & 0 & 0 & 0 & 0 & 0 & 0 & 0\\
\hline
	\end{tabular}}
	\caption{Example optimal binning using categorical variable ORGANIZATION\_TYPE from Home Credit Default Risk Kaggle competition dataset.}
	\label{kaggle_example}
\end{table}

\subsection{Benchmark CP/MIP vs local search heuristic}

For large instances, we compare the performance of Google OR-Tools' solvers BOP (MIP) and CP-SAT against LocalSolver. For these tests we select two variables from Home Credit Default Risk Kaggle competition dataset \cite{HomeCreditGroup2018} ($N=307511$). We aim to perform a far finer binning than typical in many applications to stress the performance of classical solvers for large combinatorial optimization problems.

Tables \ref{benchmark_preprocessing_1} and \ref{benchmark_preprocessing_2} show results for varying number of pre-bins $n$ and monotonic trends. In test 1 from Table \ref{benchmark_preprocessing_1} , LocalSolver does not improve after 10 seconds, not being able to reduce the optimality gap. In test 2, LocalSolver outperforms CP-SAT, finding the optimal solution after 5 seconds, 28x faster. In test 3, solution times are comparable. Results reported in Table \ref{benchmark_preprocessing_2} are also interesting; in test 1, BOP and CP-SAT solvers cannot find an optimal solution after 1000 seconds. LocalSolver finds the best found feasible solution after 30 seconds. Nevertheless, we recall that the described heuristic for peak/valley trend introduced in Section \ref{section_monotonicity} could reduce resolution times substantially, obtaining times comparable to those when choosing ascending/descending monotonic trend.

\begin{table}[htpb]
	\centering
	\scalebox{0.8}{
\begin{tabular}{cccccccc}
\hline
\bf{n} &\bf{monotonic trend} & \bf{solver} & \bf{variables} &\bf{constraints} &  \bf{time} &\bf{solution} & \bf{gap}\\
\hline
48 & peak & cp & 1225 & 3528 & 12.7 & 0.03757878 & -\\
48 & peak & ls & 193 & 2352 & 1 & 0.03373904 & 10.2\%\\
48 & peak & ls & 193 & 2352 & 5 & 0.03386574 & 9.9\%\\
48 & peak & ls & 193 & 2352 & 10 & 0.03725560 & 0.9\%\\
\hline
77 & peak & cp & 3081 & 9009 & 140.9 & 0.03776231 & -\\
77 & peak & ls &  309 & 6007 &     1 & 0.03078212 & 18.5\%\\
77 & peak & ls &  309 & 6007 &     5 & 0.03776231 & 0.0\%\\
\hline
77 & descending & cp & 3003 & 6706 & 0.9 & 0.03386574 & - \\
77 & descending & ls & 231 & 2969 & 1 & 0.03386574 & 0.0\%\\
\hline
	\end{tabular}}
	\caption{Variable REGION\_POPULATION\_RELATIVE. Performance comparison Google OR-Tools' CP-SAT vs LocalSolver. Time in seconds.}
	\label{benchmark_preprocessing_1}
\end{table}

\begin{table}[htpb]
	\centering
	\scalebox{0.8}{
\begin{tabular}{cccccccc}
\hline
\bf{n} &\bf{monotonic trend} & \bf{solver} & \bf{variables} &\bf{constraints} &  \bf{time} &\bf{solution} & \bf{gap}\\
\hline
100 & peak & cp & 5151 & 15150 & t & 0.11721972 & -\\
100 & peak & mip & 5151 & 15150 & t & $0.11786335^{*}$ & -\\
100 & peak & ls & 401 & 10101 & 1 & 0.11666556 & 1.0\%\\
100 & peak & ls & 401 & 10101 & 5 & 0.11735812 & 0.4\%\\
100 & peak & ls & 401 & 10101 & 10 & 0.11771822 & 0.1\%\\
100 & peak & ls & 401 & 10101 & 30 & 0.11786335 & 0.0\%\\
\hline
100 & ascending & cp & 5050 & 10933 & 2.3 & 0.05175782 & - \\
100 & ascending & ls & 300 & 5000 & 1 & 0.05175782 & 0.0\%\\
\hline
	\end{tabular}}
	\caption{Variable DAYS\_EMPLOYED. Performance comparison Google OR-Tools' CP-SAT/BOP vs LocalSolver. Time in seconds. *: Best feasible solution. t: 1000 seconds exceeded.}
	\label{benchmark_preprocessing_2}
\end{table}

\section{Conclusions}\label{section_conclusions}
We propose a rigorous and flexible mathematical programming formulation to compute the optimal binning. This is the first optimal binning algorithm to achieve solutions for nontrivial constraints, supporting binary, continuous and multi-class target, and handling several monotonic trends rigorously. Importantly, the size of the decision variables and constraints used in the presented formulations is independent of the size of the datasets; they are entirely controlled by the starting solution computed during the pre-binning process. In the future, we plan to extend our methodology to piecewise-linear binning and multivariate binning. Lastly, the code is available at \url{https://github.com/guillermo-navas-palencia/optbinning} to ease reproducibility.

\bibliographystyle{plain}
\bibliography{optbinning_bib}

\end{document}